\renewcommand{\P}{P}
\newcommand{\X}{\mathcal{X}}
\newcommand{\cov}[1]{C_{#1}}
\declaretheorem[name=Theorem,refname=Thm.]{theorem}
\declaretheorem[name=Lemma,sibling=theorem]{lemma}
\declaretheorem[name=Assumption,refname=Asm.]{assumption}
\crefname{assumption}{Asm.}{Asm.}
\crefname{equation}{}{}
\Crefname{equation}{Eq.}{Equations}
\crefname{figure}{Fig.}{Figs.}
\crefname{table}{Tab.}{Tabs.}
\crefname{section}{Sec.}{Sec.}
\crefname{theorem}{Thm.}{Thm.}
\crefname{lemma}{Lemma}{Lemmas}
\crefname{corollary}{Cor.}{Cor.}
\crefname{example}{Example}{Examples}
\crefname{remark}{Remark}{Remarks}
\crefname{algorithm}{Alg.}{Algorithms}
\crefname{appendix}{Appendix}{Appendices}
\crefname{subappendix}{Appendix}{Appendices}
\crefname{subsubappendix}{Appendix}{Appendices}
\newcommand{\msf}[1]{\mathsf{#1}}
\newcommand{\R}{{\mathbb{R}}}
\newcommand{\EE}{\mathbb{E}}
\newcommand{\PP}{\mathbb{P}}
\newcommand{\scal}[2]{\left\langle{#1},{#2}\right\rangle}
\newcommand{\norb}[1]{\bigl\|{#1}\bigr\|}
\newcommand{\tr}{\textrm{Tr}}
\newcommand{\hs}{\mathsf{HS}}
\newcommand{\eqals}[1]{\begin{align*}#1\end{align*}}
\newcommand{\eqal}[1]{\begin{align}#1\end{align}}
\newcommand{\Id}{\msf{Id}}
\renewcommand{\paragraph}[1]{{\bfseries #1.}}
\newcommand{\enc}{\varphi}
\newcommand{\evop}{{\msf{E}}}
\newcommand{\evopls}{E_{\enc}}
\newcommand{\Lsq}[1]{L^{2}(#1)}
\title{Self-Supervised Evolution Operator Learning for High-Dimensional Dynamical Systems}
\author{
 Giacomo Turri$^{1}$ \\ \texttt{giacomo.turri@iit.it}  \And Luigi Bonati$^{2}$ \\ \texttt{luigi.bonati@iit.it}  \And Kai Zhu$^{3}$ \\ \texttt{22319143@zju.edu.cn} \\ \AND \textbf{Massimiliano Pontil}$^{1,4}$ \\ \texttt{massimiliano.pontil@iit.it} \\ \And \textbf{Pietro Novelli}$^{1}$ \\ \texttt{pietro.novelli@iit.it} \And \\
 $^1$CSML, Istituto Italiano di Tecnologia \quad $^2$ATSIM, Istituto Italiano di Tecnologia \\ $^3$CPS, Zhejiang University \quad $^4$AI Centre, University College London
}
\begin{document}

\maketitle

\begin{abstract}
We introduce an encoder-only approach to learn the evolution operators of large-scale non-linear dynamical systems, such as those describing complex natural phenomena. Evolution operators are particularly well-suited for analyzing systems that exhibit complex spatio-temporal patterns and have become a key analytical tool across various scientific communities. As terabyte-scale weather datasets and simulation tools capable of running millions of molecular dynamics steps per day are becoming commodities, our approach provides an effective tool to make sense of them from a data-driven perspective. The core of it lies in a remarkable connection between self-supervised representation learning methods and the recently established learning theory of evolution operators. To show the usefulness of the proposed method, we test it across multiple scientific domains: explaining the folding dynamics of small proteins, the binding process of drug-like molecules in host sites, and autonomously finding patterns in climate data. Code and data to reproduce the experiments are made available open source\footnote{\url{https://github.com/pietronvll/encoderops}}.
\end{abstract}

\section{Introduction}
Dynamical systems are fundamental to understanding phenomena across a vast range of scientific disciplines, from physics and biology to climate science and engineering. Traditionally, scientists have modeled these systems by formulating differential equations from first principles. However, as systems grow in scale and complexity, this approach quickly becomes computationally intractable and difficult to interpret, hindering the study of large-scale phenomena. Simultaneously, advancements in data collection techniques and computational power have led to an explosion of available data from experiments~\cite{Hersbach2020, ocp_dataset} and high-fidelity simulations~\cite{harvey2009acemd, Abraham2015, Eastman2017, bauer2015quiet}. This abundance of data makes data-driven approaches increasingly appealing for studying complex dynamics, with machine learning~\cite{ShalevShwartz2014} becoming a dominant paradigm for learning dynamical systems, largely focusing on predictive tasks such as forecasting. The recent revolution in data-driven weather modeling~\cite{pathak2022fourcastnet, bi2022pangu, lam2023learning, kochkov2024neural} stands as a paradigmatic example of ML's power in handling complex spatio-temporal dynamics. Similarly, reinforcement learning~\cite{sutton1998reinforcement} has reimagined control theory by leveraging data-driven strategies to optimize system behavior. While these data-driven methods excel at prediction and control, there remains a significant gap in approaches that offer interpretability. In scientific contexts, merely predicting system behavior is often insufficient; understanding why a system evolves in a certain way is paramount. For instance, comprehending the intricate dynamics of molecular interactions is crucial for understanding why a drug binds to a specific target or fails to do so, a level of insight not typically provided by black-box predictive models.

A modeling paradigm particularly well-suited for interpretability is that of {\em evolution operators}~\cite{Lasota1994, Applebaum2009, kovachki2023neural}. Under mild assumptions, dynamical systems and stochastic processes can be represented by a linear operator --- a mathematical entity that maps functions to other functions. This operator-based approach offers multiple advantages. First, it linearizes the dynamics, greatly simplifying tasks like forecasting and controller design. Second, these operators possess a spectral decomposition\footnote{A generalization of the eigenvalue decomposition of a matrix.}~\cite{Reed1972}, which expresses the system's complex dynamics as a linear combination of fundamental, coherent spatio-temporal modes~\cite{Molgedey1994}. Each mode represents a distinct, intrinsic pattern associated with a unique spatio-temporal structure defined in terms of growth or decay rates and oscillation frequencies. By identifying and analyzing these principal modes, researchers gain deep insights into the underlying mechanisms driving the system's macroscopic behavior, offering a structured, physically meaningful understanding.

Building on the understanding that evolution operators provide a powerful framework for interpretable analysis, significant effort has been directed towards learning these operators directly from data. Data-driven approaches for this task emerged already in the early 2000s, including pioneering work utilizing transfer operators for analyzing stochastic processes in computational biophysics \cite{schutte2001transfer}, as well as the dynamic mode decomposition family of methods \cite{schmid2010dynamic} for deterministic systems via the Koopman operator. In the ensuing years, there has been a significant acceleration in machine learning methods for evolution operator learning, encompassing theoretical advances through kernel methods and powerful end-to-end deep learning approaches.

In this work, we propose a method to learn evolution operators that builds upon these recent foundations while leveraging insights from the seemingly unrelated literature on self-supervised representation learning. Our primary contributions are twofold: 

\noindent\fbox{%
  \begin{minipage}{\dimexpr\linewidth-2\fboxsep-2\fboxrule\relax}
    \textbf{1.~}We introduce a paradigm for learning evolution operators that is based on self-supervised contrastive learning, and is amenable to scaling to large dynamical systems.\\\\
    \textbf{2.~}We expose a deep connection between evolution operator learning and contrastive self-supervised representation learning schemes.
  \end{minipage}%
}

To showcase the practical applicability of our approach, we study realistic dynamical systems drawn from atomistic simulations and climate modeling. We believe our work opens up multiple interesting follow-ups, both concerning methodological improvements in the training pipeline and conceptual advancements in harnessing the link between evolution operator learning and self-supervised representation.

\section{Evolution operators and how to learn them}\label{sec:the_operator_way}

Evolution operator learning is a data-driven approach to characterizing dynamical systems, either stochastic, $x_{t + 1} \sim p(\cdot | x_{t})$, or deterministic, $x_{t+1} \sim \delta(\cdot - F(x_{t}))$. Throughout, we assume the dynamics to be Markovian, so that the evolution of $x_t$ depends on $x_{t}$ alone and not on the states at times $s < t$. If this assumption is not satisfied by $x_{t}$, a standard trick is to re-define the state as a context $c_{t}^{H} = f(x_{t}, x_{t -1}, \ldots, x_{t - H})$ with history length $H$, where $f$ can be a simple concatenation, or a learned sequence model (e.g., a recurrent neural network or transformer).

{\em Evolution operators} are defined as follows: for every function $f$ of the state of the system, $(\evop f)(x_{t})$ is the expected value of $f$ one step ahead in the future, given that at time $t$ the system was found in $x_t$
\eqal{
(\evop f)(x_t) = \int p(dy | x_{t}) f(y) = \EE_{y \sim X_{t + 1} | X_{t}}[f(y) | x_t].
}
Notice that $\evop$ is an operator because it maps any function $f$ to another function, $x_{t} \mapsto (\evop f)(x_t)$, and is {\em linear} because $\evop(f + \alpha g) = \evop f + \alpha \evop g$. When the dynamics is deterministic, $\evop$ is known as the {\em Koopman operator}~\cite{Koopman1931}, while in the stochastic case it is known as the {\em transfer operator}~\cite{Applebaum2009}.  

Evolution operators fully characterize the dynamical system because knowing $\evop$ allows us to reconstruct the dynamical law $p(\cdot | x_{t})$. Indeed, for any subset of the state space $B \subseteq \X$, applying $\evop$ to the indicator function of $B$, we have
\eqals{
    (\evop 1_{B})(x_t) = \int_{B}p(dy|x_{t}) = \PP\left[X_{t + 1} \in B | x_t\right].
}
An advantage of the operator approach over dealing directly with the conditional probability $p(\cdot | x_{t}$) is that $\evop$ acts linearly on the objects to which it is applied. This means that operators unlock an arsenal of tools from linear algebra and functional analysis, which would be unavailable otherwise. Arguably the most important of them is the spectral decomposition, allowing us to decompose $\evop$, and hence the dynamics, into a linear superposition of dynamical modes. These ideas lie at the core of the celebrated Time-lagged Independent Component Analysis~\cite{Molgedey1994,perez2013identification,bonati2021deep}, and  Dynamical Mode Decomposition~\cite{Schmid2010, Kutz2016}. 
\subsection{Learning $\evop$ and its spectral decomposition from data}
We now review the main approaches to learn the evolution operator and its spectral decomposition from a finite dataset of observations, with an emphasis on the least squares approach, which is essential to understand every other method as well.
A core idea of operator learning is that operators are defined by how they act {\em on a suitable linear space of functions}, similarly to how matrices are defined by their action on a basis of vectors. Of course, not every function $f$ is interesting, and this nicely parallels with the matrix example, where the most "interesting" directions are those that recover most of the variance in the data. Learning $\evop$, therefore, is usually cast as the following problem:

\noindent\fbox{%
  \begin{minipage}{\dimexpr\linewidth-2\fboxsep-2\fboxrule\relax}
  Letting $\enc(x) \in \R^{d}$ be a --- learned or fixed --- encoder of the state, find the best approximation of $\evop$ {\em restricted} to the $d$-dimensional linear space of functions generated by $\enc$, given the data.
  \end{minipage}%
}

In practice, the data is usually a collection of transitions $\mathcal{D} = (x_i, y_i)_{i = 1}^{N}$, where it is intended that $x_{i} \sim \PP[X_{t}]$ are sampled from a distribution of initial states, while $y_{i} \sim p( \cdot | x_{i})$. 

\paragraph{Least squares} In this approach the encoder $\enc$ is a frozen, that is non-learnable, dictionary of functions, and we are interested in approximating the action of $\evop$ on functions of the form $f(x) = \scal{w}{\enc(x)}$ for every $w \in \R^{d}$. To this end, one minimizes the empirical error between the true conditional expectation $\EE_{y \sim X_{t + 1} | X_{t}}[\scal{w}{\enc(y)} | x]$, and a linear model $ \scal{Ew}{\enc(x)}$, where the matrix $E \in \R^{d \times d}$ identifies the restriction of the evolution operator to the linear span of the dictionary:
\eqal{
\label{eq:least-squares-loss}
    \frac{1}{N}\sum_{i = 1}^{N} (\scal{w}{\enc(y_{i})} - \scal{Ew}{\enc(x_i)})^2 &= \frac{1}{N}\sum_{i = 1}^{N}\scal{w}{\enc(y_i) - E^\top \enc(x_i)}^{2} \nonumber \\
    & \leq \frac{1}{N}\sum_{i = 1}^{N}  \norb{\enc(x_i) - E^\top \enc(y_i)}^{2} + \lambda \norb{\enc(x_i)}^{2}.
}
In the second line, we assumed $\norb{w} \leq 1$, used the Cauchy–Schwarz inequality, and added a ridge penalty. The minimizer of~\cref{eq:least-squares-loss} can be computed in closed form~\cite[][and references therein]{korda2018convergence, kostic2022learning} as%
\eqal{
\label{eq:least-squares-estimator}
\evopls = (\cov{X} + \lambda\Id)^{-1}\cov{XY}, \quad \text{with}~~\cov{XY} = \frac{1}{N} \sum_{i = 1}^N\enc(x_i)\enc(y_i)^\top ~\text{and} ~~\cov{X} = \cov{XX}.
}
In the limit of infinite data, $N\to \infty$, and infinitely dimensional encoders, $d \to \infty$, the least squares estimator converges~\cite{korda2018convergence} in operator norm to the evolution operator $\evop$, and similar asymptotic convergence results are proved for its spectrum. 

\paragraph{Mode decomposition} 
The spectral decomposition of $\evop$ is approximated by expressing the least-squares estimator in its eigenvectors' basis $\evopls = Q\Lambda Q^{-1}$, where the columns of $Q = [q_1, \cdots, q_d]$ are the eigenvectors of $\evopls$, and $\Lambda$ is a diagonal matrix of eigenvalues. In this basis, the expected value in the future for a function $f(x) = \scal{w}{\enc(x)}$ is expressed as
\eqal{
    \label{eq:mode-decomp}
    \EE_{y \sim X_{t + 1} | X_{t}}[f(y) | x] \approx \scal{\evopls w}{\enc(x)} = \scal{Q\Lambda Q^{-1}w}{\enc(x)}
    = \sum_{i = 1}^{d} \lambda_i \scal{q_i}{\enc(x)} (Q^{-1}w)_{i}.
}

The spectral decomposition expresses the transition $x_t \to x_{t+1}$ as a sum of \emph{modes} of the form $\lambda_i \scal{q_i}{\enc(x)} (Q^{-1}w)_i$, each of which can be broken down into three components:
\begin{enumerate}
\item The eigenvalues $\lambda_i$ determine the time scales of the transition. Indeed, applying the evolution operator $s$ times to analyze the transition $x_t \to x_{t+s}$ leaves~\cref{eq:mode-decomp} unchanged, except that each $\lambda_i$ becomes $\lambda_i^s$. Writing $\lambda_i^s = \rho_i^s e^{i s \omega_i}$ in polar coordinates, reveals that the modes decay exponentially over time with rate $\rho_i$, while oscillating at frequency $\omega_i$.

\item The initial state $x$ influences the decomposition through the factor $\Psi_{i}(x) = \scal{q_i}{\enc(x)}$. This coefficient captures how strongly the state $x$ aligns with the $i$-th mode. When $q_i$ corresponds to an eigenvalue with slow decay, i.e., $|\lambda_i| \approx 1$, the term $\Psi_{i}(x)$ serves as a natural quantity for clustering states into \emph{coherent} or \emph{metastable} sets.

\item The coefficient $(Q^{-1}w)_i$, in turn, indicates how the function represented by the vector $w$ relates to the $i$-th mode. This connection makes it possible to link the dynamical patterns to specific functions --- or {\em observables} -- thereby deepening our understanding of the system.
\end{enumerate}

\paragraph{Kernel methods} Leveraging the kernel trick, one can learn evolution operators by deriving a closed-form solution of~\cref{eq:least-squares-loss} in terms of kernel matrices whose elements are of the form $k(x_i, x_j) = \scal{\enc(x_i)}{\enc(x_j)}$, with $k(\cdot, \cdot)$ a suitable kernel function. Thanks to the theory of reproducing kernel Hilbert spaces, this class of methods is backed up by statistical learning guarantees, such as the ones derived in~\cite{kostic2022learning, kostic2023sharp,nuske2023finite}. Similarly to the least-squares approach, one also approximate the spectral decomposition of $\evop$ via kernel methods, and this task captured quite a lot of attention of the researchers in this area, see~\cite{williams2014kernel,Kawahara2016, Klus2019,Das2020, Alexander2020, meanti2023estimating}.

\paragraph{Deep learning} In contrast to the above discussion, where the encoder $\enc$ is prescribed, a number of methods proposed to approximate $\evop$ from data with end-to-end schemes including $\enc$ as a learnable neural network. Since learning $\evop$ ultimately entails learning its action on the linear space spanned by $\enc$, it is appealing to choose an encoder capturing the most salient features of the dynamics. To this end, one can train $\enc$ via an encoder-decoder scheme as proposed in~\cite{lusch2018deep, azencot2020forecasting, wehmeyer2018time, frion2024neural} or with encoder-only approaches as in~\cite{mardt2018vampnets,kostic2023dpnets,federici2023latent}.

In encoder-decoder schemes, $\enc$ is trained alongside a decoder network, minimizing a combination of prediction and reconstruction errors. Notice that trying to minimize the prediction error~\cref{eq:least-squares-loss} alone immediately leads to a {\em representation collapse} with $\enc$ mapping every input to $0$ (and getting a prediction error of 0). Having to train a decoder network results in a model twice as big\footnote{Assuming that the parameter count between encoder and decoder is roughly equivalent.} compared to the encoder-only methods, making it less suitable for very large-scale applications. More fundamentally, though, minimizing a reconstruction loss is desirable when $\evop$ is employed for forecasting tasks, but might be useless or even detrimental~\cite{lyu2023taskoriented,schwarzer2020data} whenever one is interested in tasks such as interpretation and control of dynamical systems. 

Arguably, the main advantages of evolution operators over other techniques such as ~\cite{pathak2022fourcastnet, lam2023learning, pfaff2020learning, sanchez2020learning, li2020fourier} come from the associated spectral decomposition, and {\em not} from a superior performance in forecasting. Encoder-only approaches follow this intuition and prioritize approximating the spectral decomposition of $\evop$ over the raw forecasting performances. Concretely, this is accomplished via loss functions that are minimized when $\enc$ spans the leading singular space of  $\evop$. Clearly, once an encoder has been trained, one can freeze it and use it in conjunction with the techniques discussed for the least-squares approach.

In this work, we propose an encoder-only method based on a loss function originally designed for self-supervised representation learning. Though our approach is broadly applicable, we mainly focus on applications involving interpretability and model reduction of scientific dynamical systems, highlighting how ML evolution operators can help in advancing fundamental science. Promising directions in reinforcement learning~\cite{lyu2023taskoriented,schwarzer2020data,rozwood2024koopman,novelli2024operator} and control, however, could also benefit from this approach, and we leave these for future work.
\section{Contrastive learning for deep evolution operators}  \label{sec:learn}

\begin{algorithm}
\noindent %
\begin{minipage}[t]{0.55\textwidth}
\DontPrintSemicolon
\SetAlgoLined

\For{$k=1$ \KwTo \texttt{num\_steps} }{ 
$\mathcal{B} \gets \{(x_i, y_i) \sim\mathcal{D}\}_{i=1}^B$
\ForAll{$i$}{
$z_i \leftarrow \enc(x_i)$ and $q_{i} \leftarrow \P\enc(y_i)$ 
}
$r_{ij} \leftarrow \scal{z_{i}}{q_{j}}$ \\
$d\enc, d\P \leftarrow  \nabla \left[ \frac{1}{B(B-1)}\sum_{i\neq j} r_{ij}^2 - \frac{2}{B}\sum_{i} r_{ii}\right]$ \\
$\enc, \P \leftarrow \mathrm{opt}(\enc, \P,  d\enc, d\P)$
}
\end{minipage}\, \, 
\begin{minipage}{0.38\textwidth}
    \hspace{-1cm}
    \includegraphics[width=1.3\textwidth]{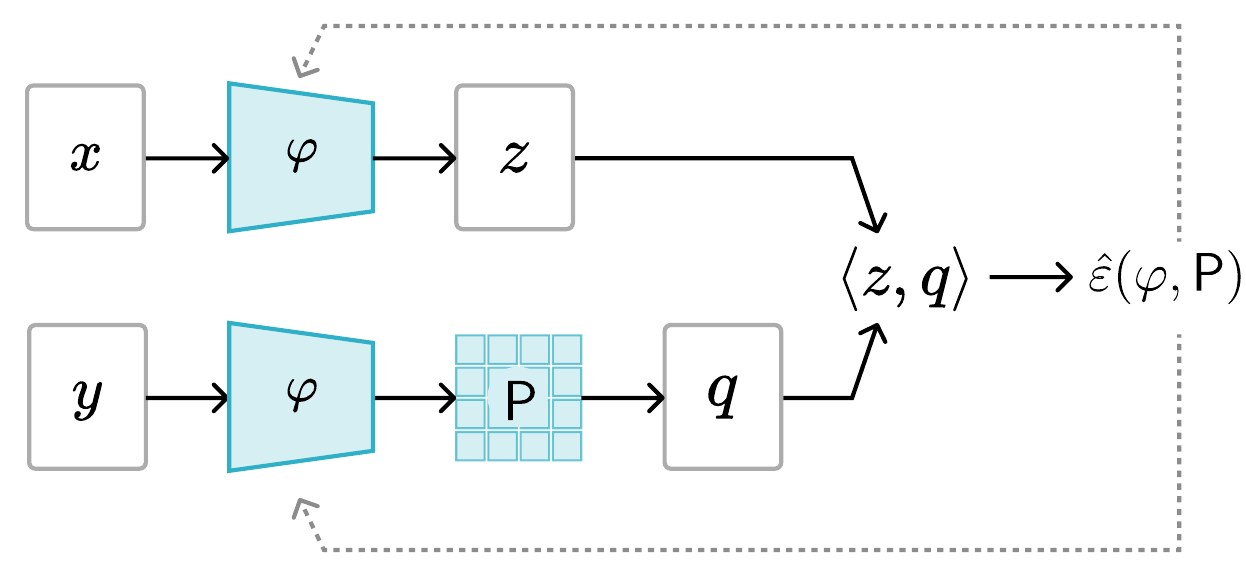}
    \vspace{-2.8cm}
\end{minipage}
\caption{\label{alg:main} A pair of consecutive observations $(x, y)$ from a dynamical system are mapped to representations $z$ and $q$ via an embedding function $\varphi$. The representation $q$ is also processed by a predictor $\P$. The algorithm iteratively optimizes $\enc$ and $\P$ using the contrastive objective~\cref{eq:loss} based on the similarity $\langle z, q \rangle$.}
\end{algorithm}

As discussed above, we are interested in the evolution operator
\eqal{
\label{eq:evop_ratioform}
(\evop f)(x_t) = \EE_{y \sim X_{t + 1} | X_t} \left[ f(y) | x_{t} \right] = \EE_{y \sim X_{t + 1}}     \left[\frac{p(y | x_t)}{p(y)} f(y)\right],
}
where in the last equality we expressed the expectation in the form of an importance sampling estimator with respect to the probability of the future state $\PP(X_{t + 1})$. In so doing, we link the evolution operator $\evop$ to the density ratio 
\begin{equation}
    \label{eq:density_ratio}
r(x_t, x_{t + 1}) = \frac{p(x_{t + 1} | x_{t})}{p(x_{t + 1})}.
\end{equation}
In our self-supervised scheme, see Alg.~\ref{alg:main}, we optimize a model for the density ratio~\cref{eq:density_ratio} parametrized as the bilinear form $\scal{\enc(x_{t})}{\P\enc(x_{t + 1})}$. Here, $\enc$ is a $d$-dimensional encoder, while $\P$ is a linear {\em predictor} layer which, as discussed below, approximates the action of $\evop$ on the linear subspace of functions spanned by $\enc$.

One motivation behind our approach is provided by Contrastive Predictive Coding~\cite{oord2018representation}, where the density ratio $r(x_{t}, x_{t + 1})$ is noted to be a proxy for the mutual information between $x_{t}$ and $x_{t +1}$: 
\eqals{
I(X_{t}, X_{t + 1}) = \EE_{(x, y) \sim (X_t, X_{t + 1})}\left[\log \left(r(x, y)\right)\right].
}
Having a good approximation of the density ratio is a necessary condition to ensure that the features from our encoder $\enc$ retain the information on the evolution $x_{t} \to x_{t +1}$. As shown by~\citep{tschannen2020mutual}, however, retaining information between the covariates is not sufficient to have a strong representation on downstream tasks. To address this concern, in place of the commonly used contrastive losses -- which are lower bounds of the mutual information~\cite{oord2018representation, gutmann2010noise, poole2019variational} -- we directly optimize the $L^2$ error between the density ratio and our bilinear model $\scal{\enc(x_{t})}{\P\enc(x_{t + 1})}$:
\eqal{
\label{eq:abstract_loss}
\varepsilon(\enc, \P) = &\EE_{(x, y) \sim X_{t} \otimes X_{t + 1}}\left[ \left(r(x, y) - \scal{\enc(x)}{\P\enc(y)}\right)^{2}\right] \nonumber \\
= & \EE_{(x, y) \sim X_{t} \otimes X_{t + 1}}\left[ \scal{\enc(x)}{\P\enc(y)}^2\right]  -2\EE_{(x, y) \sim (X_t, X_{t + 1})}\left[ \scal{\enc(x)}{\P\enc(y)}\right] + \text{cst.},
}
where $\EE_{X_{t} \otimes X_{t + 1}}$ is the expected value between the product of the marginals $X_{t}$ and $X_{t + 1}$\footnote{That is, the product measure $\PP[X_{t}] \otimes \PP[X_{t +1}]$}, while the second line is obtained by expanding the square and using Bayes theorem. The choice of~\cref{eq:abstract_loss} in place of more common contrastive losses is motivated by the following theoretical observation~\cite{wang2022spectral, kostic2024neural}: if the evolution operator $\evop$ is Hilbert-Schmidt, the loss \cref{eq:abstract_loss} can be equated\footnote{A formal and self-contained proof of this statement is reported in the Supplementary Material.} to a proper operator-learning error $\norb{\evop - \Phi^* \P \Phi}_{{\rm HS}}^2$, where $\Phi f \in \R^{d}$ with components $\scal{f}{\enc_{i}}_{L^{2}(X_t)}$. The loss~\cref{eq:abstract_loss}, therefore, is a natural choice given the context of this work.

Estimating the squared term in~\cref{eq:abstract_loss} via U-statistics~\cite{hoeffding1992class} and foregoing the constant term, we finally get to the empirical loss  
\eqal{\label{eq:loss}
\hat{\varepsilon}(\enc, \P) = \frac{1}{N(N-1)}\sum_{i\neq j} \scal{\enc(x_i)}{\P\enc(y_j)}^2 - \frac{2}{N}\sum_{i = 1}^{N} \scal{\enc(x_i)}{\P\enc(y_i)}.
}

Let us notice that~\cref{eq:loss}, has been proposed independently multiple times across different corners of machine learning research. For example, it drives the spectral contrastive learning approach of~\cite{haochen2021provable, haochen2022beyond}, it has been used in the context of reinforcement learning in~\cite{ren2022spectral}, causal estimation in~\cite{sun2024spectral}, and more generally in conditional expectation models in~\cite{wang2022spectral, kostic2024neural}. Recently,~\citep{lu2024f} showed that~\cref{eq:loss} belongs to a wide class of contrastive learning losses defined by Csisz\'ar $f$-divergences.

Plugging our model back into~\cref{eq:evop_ratioform}, the evolution operator gets parametrized as
\eqals{
(\evop f)(x_t) = \EE_{y \sim X_{t + 1}} \left[ r(x_t, y)f(y)\right] \approx \EE_{y \sim X_{t + 1}} \left[ \scal{\enc(x_t)}{\P\enc(y) }f(y)\right].
}
Further, if the function $f$ is in the linear span generated by the encoder $f(y) =\scal{\enc(y)}{w}$, we can simplify the expression above as,
\eqals{
(\evop f)(x_t) = \scal{\enc(x_t)}{\P \left(\EE_{y \sim X_{t + 1}} \left[\enc(y)\enc(y)^\top\right]\right) w} = \scal{\enc(x)}{ \P \cov{Y} w},
}
where we introduced the covariance of the futures $\cov{Y} = \EE_{y \sim X_{t + 1}}[\enc(y)\enc(y)^\top]$. Thus, the matrix $\evopls = \P \cov{Y}$ provides an approximation of the evolution operator $\evop$ in the finite-dimensional space generated by the state representation $\enc$. 

Remarkably, for any fixed $\enc$, the $\P$ minimizing \cref{eq:abstract_loss} can be computed in closed form (see Supplementary Material), and is given by $\P_{*} = \cov{X}^{-1} \cov{XY} \cov{Y}^{-1}$, so that the model for the evolution operator is given by
\eqal{
\label{eq:ls-from-predictor}
\evopls = \P_{*}\cov{Y} = \cov{X}^{-1} \cov{XY} = \text{Eq. \cref{eq:least-squares-estimator} with } \lambda \to 0,
}
coinciding with the least-squares estimator discussed above.

\paragraph{Connection with the VAMP-2 Score} Originally introduced as a representation learning scheme for molecular kinetics, the VAMP-$r$ scores~\cite{wu2020variational} are principled metrics to evaluate the quality of an encoder $\enc$ in the context of evolution operator learning. In particular, the VAMP-2 score can be defined in terms of covariances as
\eqal{
\label{eq:VAMP_score}
 \text{VAMP}_{2}(\enc) = \|\cov{X}^{-1/2}\cov{XY}\cov{Y}^{-1/2}\|_{\hs}^{2}.
}
By noticing that the loss function~\cref{eq:abstract_loss} can be equivalently rewritten as
\eqals{
    \varepsilon(\enc, \P) = \tr[\P^{\top}\cov{X}\P\cov{Y} - 2\P\cov{YX}],
}
and substituting the optimal predictor $\P_{*} = \cov{X}^{-1} \cov{XY}$ inside this expression, we obtain the remarkable identity
\eqals{
\varepsilon(\enc, \P_{*}) = -\|\cov{X}^{-1/2}\cov{XY}\cov{Y}^{-1/2}\|_{\hs}^{2} = -\text{VAMP}_{2}(\enc).
}
Our loss function, therefore, matches the negative VAMP-2 score when $\P$ is optimal. Compared to methods that directly maximize the VAMP score such as~\cite{mardt2018vampnets}, however, our approach do not require matrix inversions in the computation of the loss~\cite{wu2020variational}, an operation which is unwieldy and prone to instabilities\footnote{Backpropagation through inversions may lead to gradient explosion~\cite{Golub1973}.} in large-scale applications. Instead, the loss function~\eqref{eq:loss} is written in terms of simple matrix multiplications, making it perfect for GPU-based training.

\subsection{Practical implementation}
The implementation of our method, summarized in Alg.~\ref{alg:main}, follows standard self-supervised learning procedures~\cite{chen2020simple, grill2020bootstrap,zbontar2021barlow,chen2021exploring}. There, a {\em positive pair} of data-points, in our case a pair of consecutive observations of the dynamical system, are processed through an encoder network $\enc$, and optionally a predictor network, which in our case is a simple linear layer $P$. We apply simplicial normalization~\cite{lavoie2022simplicial} to the outputs of the embedding $\enc$.
To keep our implementation as close to the theoretical insights as possible, we didn't concatenate additional projection heads to the encoder $\enc$, as suggested in~\cite{chen2020simple, grill2020bootstrap, zbontar2021barlow,chen2021exploring}. Furthermore, because of the identity~\cref{eq:ls-from-predictor} we kept $\P$ linear, despite tiny MLPs being usually employed for the predictors. While experimenting with these architectural variants is out of the scope of this work, we believe it is an exciting direction for future work.

To get the least-squares estimator $\evopls$ associated with $\enc$, different options are available. One can make use of the closed form expression~\cref{eq:least-squares-estimator} by computing the covariances at the end of the training of $\enc$. This approach, however, requires a full forward pass over the full training dataset, which might be impractical for very large datasets. Another option is to use~\cref{eq:ls-from-predictor}, but this again requires the evaluation of the covariance, and might be suboptimal whenever $\P$ isn't yet converged to the true minimizer $\P_{*}$. In our implementation, instead, we kept two buffers for $\cov{X}$ and $\cov{XY}$, which are updated online during the training loop via an exponentially moving average of the batch covariances. At the end of the training, we use buffers to compute $\evopls$ as in~\cref{eq:least-squares-estimator}.

\section{Experiments} \label{sec:exp}
\subsection{Warm up: Lorenz '63}
As an appetizer, we evaluated our method on the Lorenz ’63 system \cite{Lorenz1963-LORDNF}, a classical example of a chaotic dynamical system governed by three coupled ordinary differential equations. %
To validate the performance of our approach, we tested it on a one-step-ahead forecasting task, and we analyzed the learned dynamical modes. Because of the low-dimensionality of the state $x_{t}$, we appended it as a non-learnable feature of the encoder $\enc(x_t) = [{\rm MLP}(x_{t}), x_{t}]$ to ensure that the forecasting target--the state itself--lies in the linear space of functions spanned by $\enc$ by design. The learnable part of the encoder consisted of a small multi-layer perceptron (MLP).%

In \cref{tab:L63}, we compare the performance of the estimator $\evopls$ from~\cref{eq:least-squares-estimator}, with an encoder $\enc$ trained according to Alg.~\ref{alg:main}, against several baseline models. These include Linear Least Squares (LinLS), Kernel Ridge Regression~\cite{kostic2022learning} (KRR) with a Gaussian kernel, VAMPNets~\cite{mardt2018vampnets}, Dynamic Autoencoder~\cite{lusch2018deep} (DAE), and Consistent Autoencoder~\cite{azencot2020forecasting} (CAE). To ensure a fair comparison, we matched the encoder architecture for VAMPNets, DAE and CAE, while decoders of DAE and CAE were defined as MLPs symmetric to their respective encoders. For KRR, the rank was set equal to the latent dimensionality used in the deep learning models. %

The results on the forecasting task demonstrate that, although our model is not specifically designed for prediction, it achieves the best performance among all considered methods. Furthermore, when comparing the runtimes of the deep learning-based approaches, our implementation of Alg.~\ref{alg:main} is faster than VAMPNets, DAE, and CAE. Finally, we verified that the leading eigenfunctions obtained by our approach correctly identify coherent sets on the stable attractor (see~\cref{fig:L63}).

\begin{figure}
  \begin{minipage}[b]{.75\linewidth}
\centering
\footnotesize
\setlength{\tabcolsep}{2pt}
\begin{tabular}{lcccccc}
\toprule
 & \textbf{Ours} & LinLS & KRR & VAMPNets & DAE & CAE \\
\midrule
RMSE &
\textbf{0.48{\scriptsize$\pm$0.09}} &         
1.14 & 
2.31 & 
0.65{\scriptsize$\pm$0.11} &
0.79{\scriptsize$\pm$0.51} & 
2.14{\scriptsize$\pm$0.40}
\\ 
Runtime (s) & 19.4{\scriptsize$\pm$0.2} & 
\textbf{(4.7{\scriptsize$\pm$1})$\mathbf{\times10^{-4}}$} & 
25.9{\scriptsize$\pm$0.9} & 
27.7{\scriptsize$\pm$0.1} &
29.9{\scriptsize$\pm$0.3} &
35.0{\scriptsize$\pm$0.1}
\\
\bottomrule
\end{tabular}\captionof{table}{Forecasting errors and training times for the Lorenz '63 example (averaged over 20 independent runs). RMSE values are scaled by $10^{-2}$.}\label{tab:L63}    
\end{minipage}\hfill
  \begin{minipage}[b]{.22\linewidth}
    \centering
    \includegraphics[width=\linewidth, height=1.8cm]{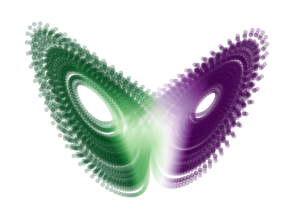}
    \captionof{figure}{2\textsuperscript{nd} leading eigenfunction.} \label{fig:L63} 
  \end{minipage}
\end{figure}

\subsection{High-resolution dynamical modeling of protein folding}

\begin{figure}[ht!]
\centering
\includegraphics[width=\textwidth]{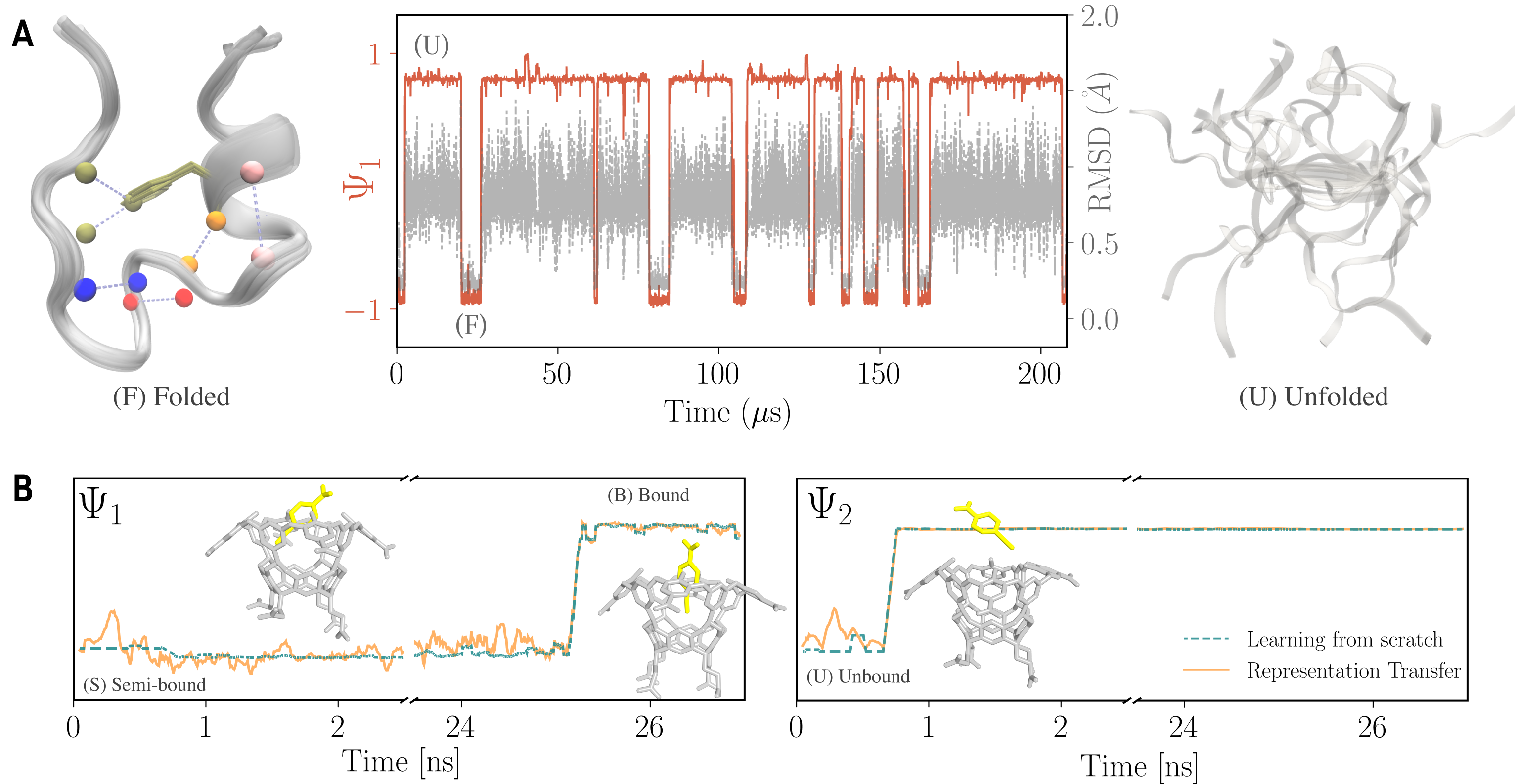}
\caption{Slow dynamical modes of biomolecular processes.  
\textbf{A}: \textit{Trp-Cage folding}. Time series of the leading eigenfunction $\Psi_1$ (red, left axis) alongside RMSD (gray, right axis), capturing transitions between folded (F) and unfolded (U) states. Representative snapshots of each state are shown. In the folded structure, key hydrogen bonds identified as relevant by the LASSO model are highlighted.  
\textbf{B}: \textit{Calixarene binding}. Eigenfunctions $\Psi_1$ (left) and $\Psi_2$ (right) capture ligand transitions from unbound (U) to semi-bound (S) and bound (B) states. The model using a representation transferred from other ligands (solid line) closely matches one trained from scratch (dashed). Right: representative structures corresponding to each metastable state.
}
\label{fig:results}
\end{figure}

The Trp-Cage miniprotein is a widely studied benchmark for protein folding due to its small size and fast dynamics~\cite{lindorff2011fast}. Previous works, including SRV-based Markov State Models~\cite{sidky2019high} and GraphVAMPNet~\cite{ghorbani2022graphvampnet}, have modeled Trp-Cage dynamics using coarse-grained representations, where the state of the system is defined by the small subset of 20 C$_{\alpha}$ atoms in the backbone of the protein. 
Our approach allows us to scale to a more expressive molecular representation based on all 144 heavy atoms, employing the SchNet \cite{schutt2017schnet} graph neural network architecture as the encoder $\enc$. 
After training, we calculate the eigenvalue decomposition of the evolution operator as described in~\cref{sec:the_operator_way}. As shown in Fig.~\ref{fig:results}A, the leading eigenfunction $\Psi_{1}(x) = \scal{q_{1}}{\enc(x)}$ correlates strongly with the system’s root-mean-square deviation (RMSD) from the folded structure, confirming that $\Psi_1$ encodes the folding-to-unfolding transition. Clustering the molecular configurations according to the values of $\Psi_1$ reveals a clear separation between folded and unfolded ensembles (see snapshots in Fig.~\ref{fig:results}A).

To interpret the nature of this slow mode, we regress $\Psi_1$ against a library of physically meaningful descriptors—specifically, hydrogen bond interactions across residue pairs—using a sparse LASSO model~\cite{brunton2016discovering, zhang2024descriptor, novelli2022characterizing}. This analysis reveals a network of hydrogen bonds stabilizing the folded state, including contributions from side-chain interactions that would be invisible to coarse-grained dynamical models such as~\cite{ghorbani2022graphvampnet}. Finally, we note that the implied timescale\footnote{The implied timescale can be computed from the eigenvalues as $\tau_{i} = -\Delta t / \log (\lambda_{i})$, where $\Delta t$ is the time lag between two consecutive observations.} $\tau_1$ derived from the leading eigenvalue of the learned operator is approximately 2.5~$\mu s$. This is higher than the 2~$\mu s$ timescale obtained by GraphVAMPNet using the same GNN architecture but trained only on C$_\alpha$ inputs~\cite{ghorbani2022graphvampnet}. According to the variational principle for Markov processes~\cite{wu2020variational, noe2013variational}, higher implied timescales indicate a better approximation of the system’s true slow dynamics. This underscores the advantage of a fine-grained representation in capturing accurate and interpretable slow dynamics.
\vspace{-0.5cm}
\subsection{Learning transferable representations for the binding of small molecules}
Our second case study focuses on the binding of small molecules to a calixarene-based system~\cite{yin2017overview}, which is often used as a simplified model to study the dynamical processes relevant, for instance, in drug design. 
Our baseline is obtained by using Alg.~\ref{alg:main} to train an encoder $\enc$ on molecular dynamics data describing the binding dynamics of a single molecule (G2) to the host system. As in the previous example, we employ a SchNet architecture for $\enc$.
As shown in Fig.~\ref{fig:results}B, the slowest dynamical mode, captured by the dominant eigenfunction $\Psi_1$, is associated with a transition between a semi-bound configuration and the fully bound state. Structural inspection reveals that this intermediate state corresponds to a misaligned pose of the guest, caused by the presence of a water molecule occupying the binding pocket. The second eigenfunction $\Psi_2$ instead resolves the unbound-to-bound transition. Our findings align with previous works~\cite{rizzi2021role}, where water occupancy was identified as a key kinetic bottleneck in host–guest interactions.

We now turn to a key question: can a representation trained on one set of molecular systems generalize to others? This capability is essential for scalable modeling in applications like drug discovery, where retraining a model for every new compound is prohibitive. To test the transferability of the representations $\enc$ trained with our method, we conduct the following experiment: we train the encoder on molecular dynamics simulations for two molecules (G1 and G3), then freeze it, and use it to analyze the binding dynamics of a different ligand (G2). Using the frozen encoder, we compute the evolution operator of (G2) via~\cref{eq:least-squares-estimator}, and examine its dominant eigenfunctions. Remarkably, the transferred representation successfully recovers the key dynamical modes of the binding process of (G2) without having seen it during the representation learning phase. In particular, it recovers both the entry of the guest molecule into the host cavity and its final locked configuration (Fig~\ref{fig:results}B). Although this is a simplified setting, the result illustrates that our self-supervised model learns features that are not only informative but also transferable across related molecular systems.
\subsection{Patterns in Global Climate}
\begin{figure}[ht!]
\centering
\includegraphics[width=\textwidth]{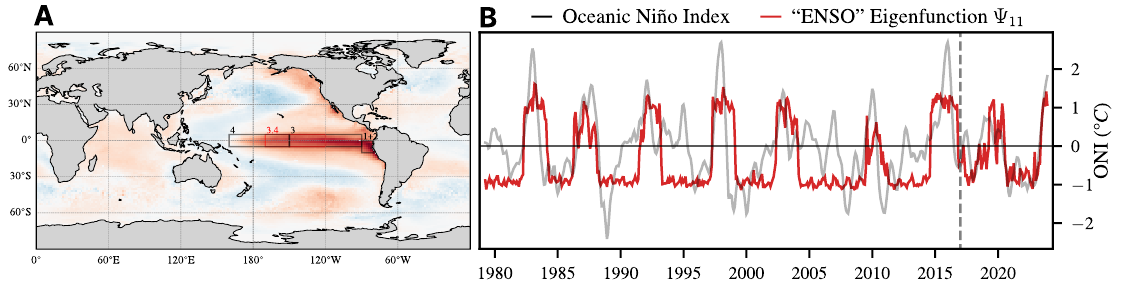}
    \caption{ENSO mode from an encoder trained with Alg.~\ref{alg:main}. 
    \textbf{A} Mode associated with the 11\textsuperscript{th} eigenfunction, highlighting dominant activation in the tropical Pacific. Boxes indicate standard ENSO monitoring zones.
    \textbf{B} Right eigenfunctions corresponding to the 11\textsuperscript{th} eigenvalues, compared to the ONI index (black). The vertical line marks the split between training and validation sets.}
    \label{fig:ENSO}
\vspace{-0.5cm}
\end{figure}
The El Ni\~no–Southern Oscillation (ENSO) is one of the most influential sources of interannual climate variability~\cite{diaz2000nino, glantz2020reviewing, callahan2023persistent}, arising from coupled ocean–atmosphere dynamics in the tropical Pacific \cite{bjerknes1969atmospheric, philander1983nino}. %
Characterizing ENSO is a central goal in climate science, particularly in the context of its potential changes under global warming \cite{mcphaden2006enso, cai2021changing}. Traditional approaches often relied on hand-crafted indices to isolate ENSO-related variability \cite{wang2017nino, timmermann2018nino}, but recent efforts have turned to operator-theoretic and machine learning methods for a more systematic understanding \cite{froyland2021spectral, ham2019deep}. %
ENSO is conventionally characterized using monthly-averaged SST anomalies, denoted as SST*. Following the procedure outlined in \cite{noaaClimatePrediction}, we compute SST* from SST data obtained from the ORAS5 reanalysis \cite{zuo2019ecmwf}, as made available in the ChaosBench dataset \cite{nathaniel2024chaosbench}.%

For this experiment, we trained a ResNet-based~\cite{he2016deep} encoder on the SST* dataset via Alg.~\ref{alg:main}. The model was trained using a lag time of 1 month on data from 1979 to 2016, while the 2017–2023 period is reserved for validation. %
After learning the representation, we estimate the transfer operator as described in \cref{eq:least-squares-estimator} and compute its spectral decomposition to extract dominant modes~\cref{eq:mode-decomp}, see \cref{fig:ENSO}. As expected, we recover modes corresponding to known periodicities in the climate system, including annual, seasonal, and sub-seasonal oscillations (see Supplementary Material). Remarkably, one of the leading nontrivial modes (11\textsuperscript{th} in magnitude) clearly reflects ENSO dynamics.  Its associated right eigenfunction exhibits a strong correlation with the Oceanic Ni\~no Index (ONI), a widely used metric for ENSO monitoring, %
while the corresponding spatial mode shows dominant activation over the tropical Pacific region (\cref{fig:ENSO}A). This result underscores the model’s ability to autonomously identify in an unsupervised manner complex climate phenomena without prior localization~\cite{froyland2021spectral, lapo2025method}. Furthermore, our method generalizes effectively to unseen data, successfully detecting the 2023 El Ni\~no event within the validation period.
\vspace{-0.3cm}
\section{Conclusion}
\label{sec:conclusions}
In this work, we introduced a framework for learning evolution operators through self-supervised contrastive learning, bridging two previously disconnected research areas: data-driven dynamical systems modeling and representation learning. Our method scales effectively to large and complex systems while preserving the interpretability afforded by evolution operators. By uncovering a connection between contrastive learning objectives and the spectral properties of evolution operators, we enable new tools for analyzing high-dimensional, spatio-temporal data in a physically meaningful way. Our experiments on atomistic and climate systems demonstrate the versatility and promise of this approach. Looking ahead, this connection opens the door to more expressive learning architectures, robust training strategies, and broader applications in scientific discovery and control. 

\paragraph{Limitations} We highlight key limitations of our method. The identification of~\cref{eq:loss} with a proper evolution operator loss holds only if $\evop$ is Hilbert-Schmidt—an assumption often violated by deterministic dynamical systems. Methodologically, we opted for simplicity and did not incorporate several architectural and optimization advances from self-supervised learning that could improve training. As detailed in the Supplementary Material, the climate example was particularly challenging and would benefit from a more robust pipeline. Finally, due to the nature of our experiments, evaluation was more qualitative than typical in ML; designing simple benchmarks to assess the learned spectral decomposition would benefit the broader evolution operator learning community.

\paragraph{Acknowledgements} We acknowledge the CINECA award under the ISCRA initiative (grant \texttt{IscrC\_LR4LSDS}). This work was also partially funded by the European Union - NextGenerationEU initiative and the Italian National Recovery and Resilience Plan (PNRR) from the Ministry of University and Research (MUR), under Project PE0000013 CUP J53C22003010006 "Future Artificial Intelligence Research (FAIR)", and European Project ELIAS N. 101120237. 

\bibliographystyle{unsrt}
\bibliography{bibliography}

\newpage
\appendix
\onecolumn
\section*{\Huge Supplementary Material}
\section{Theory: connection between operator learning and self-supervised learning}
The material presented in this section is based on the results of~\cite[Lemma 3.2]{haochen2021provable}, ~\cite[Lemma 4.1]{wang2022spectral}, and~\cite[Theorem 1]{kostic2024neural}. We will show that the loss~\cref{eq:abstract_loss} is equivalent to an operator learning loss, in which we try to regress the $\evop$ directly.

Define $\nu = \PP[X_{t}]$, the distribution of the initial states in our dataset, and $\mu = \EE_{x \sim \nu}[p(\cdot | x)] = \PP[X_{t +1}]$ the distribution of the evolved states. In practice, $\nu$ can be the following:
\begin{itemize}
    \item If a simulator is available, $\nu$ can be {\em any} distribution of initial states, and $\mu$ is obtained by a single step of the simulator on data from $\nu$.
    \item If one samples trajectories of length $T$ from an initial distribution $\PP[X_{1}]$, then $\nu = \frac{1}{T}\sum_{i = 1}^{T} \PP[X_{i}]$.
    \item If --- as in the molecular dynamics or Lorenz 63 experiments --- one samples from an {\em invariant} distribution $\pi$ such that $\PP[X_{t}] = \pi \implies \PP[X_{t + 1}] = \pi$, one has $\nu = \mu = \pi$.
\end{itemize}
The evolution operator $\evop$ maps functions from $\Lsq{\mu}$ into $\Lsq{\nu}$, that is $\evop : \Lsq{\mu} \to \Lsq{\nu}$. We will need the following assumption:
\begin{assumption}
    The evolution operator $\evop$ is Hilbert-Schmidt. 
\end{assumption}
Let now $\enc: \X \to \R^{d}$ be an encoder whose components are square-integrable with respect to both $\mu$ and $\nu$ --- $\enc_{i} \in \Lsq{\mu}$ and $\Lsq{\nu}$ --- and define the linear operators
\eqals{
    \Phi_{\mu} : \Lsq{\mu} &\to \R^{d} \qquad
    f \mapsto f = (\scal{f}{\enc_i}_{\Lsq{\mu}})_{i = 1}^{d}. \\
    \Phi_{\nu}^{*} : \R^{d} &\to \Lsq{\nu} \qquad
    z \mapsto \sum_{i = 1}^{d}\enc_{i}(\cdot)z_{i}. \\
}
We will now show this simple fact:
\begin{lemma}
    The loss function~\cref{eq:abstract_loss} is equivalent to the following operator learning loss:
\eqals{
\varepsilon(\enc, \P) = \norb{\evop - \Phi^*_{\nu} \P \Phi_{\mu}}^{2}_{\hs}.
}
\end{lemma}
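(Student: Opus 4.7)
The plan is to identify both $\evop$ and $\Phi_\nu^{*} \P \Phi_\mu$ as integral operators from $L^2(\mu)$ to $L^2(\nu)$, read off their kernels, and then invoke the standard fact that the Hilbert--Schmidt norm of an integral operator equals the $L^2(\nu\otimes\mu)$ norm of its kernel. The Hilbert--Schmidt assumption on $\evop$ is exactly what guarantees that $\evop$ admits such an integral representation (with a square-integrable kernel).

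First I would rewrite the evolution operator in the density-ratio form already used in~\cref{eq:evop_ratioform}:
\eqals{
(\evop f)(x) = \int p(y\mid x)\,f(y)\,dy = \int r(x,y)\,f(y)\,d\mu(y),
}
so that $\evop$ is the integral operator with kernel $k_{\evop}(x,y) = r(x,y)$ relative to the pair $(\nu,\mu)$. Under the Hilbert--Schmidt assumption, $r \in L^2(\nu\otimes\mu)$ and $\|\evop\|_{\hs}^2 = \int\!\int r(x,y)^2 \,d\nu(x)\,d\mu(y)$.

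Next I would compute the kernel of $\Phi_\nu^* \P \Phi_\mu$ by direct expansion. For $f\in L^2(\mu)$,
\eqals{
(\Phi_\nu^{*}\P\Phi_\mu f)(x) = \sum_{i}\enc_i(x)\,(\P\,\Phi_\mu f)_i = \sum_{i,j}\enc_i(x)\,\P_{ij}\int \enc_j(y)\,f(y)\,d\mu(y) = \int \scal{\enc(x)}{\P\enc(y)}\,f(y)\,d\mu(y),
}
so its integral kernel is $k(x,y) = \scal{\enc(x)}{\P\enc(y)}$. Since a finite-rank operator between $L^2$ spaces is automatically Hilbert--Schmidt, the difference $\evop - \Phi_\nu^{*}\P\Phi_\mu$ is Hilbert--Schmidt with kernel $r(x,y) - \scal{\enc(x)}{\P\enc(y)}$.

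Finally I would apply the HS-norm-equals-$L^2$-norm-of-kernel identity to conclude
\eqals{
\norb{\evop - \Phi_\nu^{*}\P\Phi_\mu}_{\hs}^{2} = \int\!\!\int \bigl(r(x,y) - \scal{\enc(x)}{\P\enc(y)}\bigr)^{2} d\nu(x)\,d\mu(y) = \varepsilon(\enc,\P),
}
where the last equality is just unpacking the definition of $\nu \otimes \mu$ as $\PP[X_t]\otimes\PP[X_{t+1}]$. The only nontrivial point, and the one I would be most careful about, is the kernel representation of $\evop$ and the identification of the HS norm with the $L^2$ norm of the kernel; both follow cleanly from the Hilbert--Schmidt hypothesis and the standard isomorphism $\mathrm{HS}(L^2(\mu),L^2(\nu)) \simeq L^2(\nu\otimes\mu)$, so no further machinery is needed.
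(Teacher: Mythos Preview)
Your proof is correct and arguably more transparent than the paper's. The paper expands $\norb{\evop - \Phi^*_{\nu}\P\Phi_{\mu}}_{\hs}^{2}$ algebraically via the trace, using the identities $\Phi_{\nu}\Phi^*_{\nu} = \EE_{\nu}[\enc(x)\enc(x)^{\top}]$ and $\Phi_{\nu}\evop\Phi^*_{\mu} = \EE_{\rho}[\enc(x)\enc(y)^{\top}]$ together with the cyclic property, and matches the resulting three terms against the expanded form of~\cref{eq:abstract_loss}. You instead invoke the isomorphism $\mathrm{HS}(L^2(\mu),L^2(\nu)) \simeq L^{2}(\nu\otimes\mu)$ once, read off the two integral kernels $r(x,y)$ and $\scal{\enc(x)}{\P\enc(y)}$, and conclude in one line. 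Your route makes the role of the Hilbert--Schmidt hypothesis explicit (it is exactly what puts $r\in L^{2}(\nu\otimes\mu)$ and identifies $\norb{\evop}_{\hs}^{2}$ with the constant term in~\cref{eq:abstract_loss}, a step the paper leaves implicit), while the paper's trace computation has the advantage of yielding the covariance form $\tr[\P^{\top}\cov{X}\P\cov{Y} - 2\P\cov{YX}]$ along the way, which is reused later for the closed-form minimizer and the VAMP-2 connection.
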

\begin{proof}
    First let's notice that by direct calculation one obtains 
    \eqals{
        \Phi_{\nu}\Phi^*_{\nu} = \EE_{\nu} \left[\enc(x)\enc(x)^{\top}\right] \qquad \Phi_{\nu}\evop\Phi^*_{\mu} = \EE_{\rho} \left[\enc(x)\enc(y)^{\top}\right],
    }
    where $\rho(dx, dy) = p(dy|x)\nu(dx)$ is the joint distribution of $(X_{t}, X_{t +1})$.
    
    By definition of Hilbert-Schmidt norm we have
    \eqals{
        \norb{\evop - \Phi^* \P \Phi}^{2}_{\hs} &= \norb{\evop}_{\hs}^{2} -2\tr\left[\evop^*\Phi^*_{\nu}\P\Phi_{\mu}\right] + \tr\left[\Phi^*_{\mu} \P^{\top}\Phi_{\nu}\Phi^*_{\nu}\P\Phi_{\mu}\right] \\
        &= \norb{\evop}_{\hs}^{2} -2\tr\left[\Phi_{\mu}\evop^*\Phi^*_{\nu}\P\right] + \tr\left[\Phi_{\mu}\Phi^*_{\mu} \P^{\top}\Phi_{\nu}\Phi^*_{\nu}\P\right] \\
        &= \norb{\evop}_{\hs}^{2} -2\EE_{(x, y) \sim \rho}\left[\tr\left[\enc(y)\enc(x)^{\top}\P\right]\right] + \EE_{(x, y) \sim \mu \otimes \nu}\left[\tr\left[\enc(y)\enc(y)^{\top} \P^{\top}\enc(x)\enc(x)^{\top}\P\right] \right]\\
        &= \norb{\evop}_{\hs}^{2} -2\EE_{(x, y) \sim \rho}\left[\scal{\enc(x)}{\P\enc(y)}\right] + \EE_{(x, y) \sim \mu \otimes \nu}\left[\scal{\enc(x)}{\P\enc(y)}^{2} \right],\\
    }
    where we repeatedly used the cyclic property of the trace.
\end{proof}
Another interesting property of~\cref{eq:abstract_loss}, is that we can compute in its minimizer with respect to $\P$ in closed form. This follows by noticing that $\varepsilon(\enc, \P)$ is convex in $\P$. Taking the gradient (see, for example~\cite{minka2000old}) one has:
\eqals{
\nabla_{\P} \varepsilon(\enc, \P) &= -2\EE_{(x, y) \sim \rho}\left[\enc(y)\enc(x)^{\top}\right] + 2\EE_{(x, y) \sim \mu \otimes \nu}\left[\enc(y)\enc(y)^{\top} \P^{\top}\enc(x)\enc(x)^{\top}\right] \\
&=  -2\cov{YX} + 2\cov{Y}\P^{\top}\cov{X} \\
& \implies \P_{*} = \cov{X}^{-1}\cov{YX}\cov{Y}^{-1} \qquad (\nabla_{\P} \varepsilon(\enc, \P_{*}) = 0).
}
\section{Experimental details}
The experiments have been performed on the following hardware:
\begin{itemize}
    \item 1 Node with 32 cores Ice Lake at 2.60 GHz, 4 $\times$ NVIDIA Ampere A100 GPUs, 64 GB and 512 GB RAM. 
    \item 1 Node with 20 cores Xeon Silver 4210 at 2.20 GHz, 4 $\times$ NVIDIA Tesla V100 GPUs, 16 GB and 384 GB RAM.
    \item A workstation equipped with a i7-5930K CPU at 3.50 GHz, 2 $\times$ NVIDIA GeForce GTX TITAN X GPUs, 12 GB and 32 GB of RAM.
\end{itemize}
\subsection{Lorenz '63}
\textbf{Training details.} We generated a single long trajectory of 15,000 time steps using the \texttt{kooplearn} 1.1.3~\cite{kooplearn} implementation of Lorenz '63 dynamical system, with default parameters. To ensure convergence to a system's attractor~\cite{tucker1999lorenz}, we discarded the first 1,000 time steps. Also, to obtain approximately time-independent segments for training, validation and testing, we further discarded 1,000 time steps between each split. In total, 10,000 time steps were used for training, and 1,000 time steps each for validation and testing. 

Our encoder consisted of an MLP with an input layer of size 3, two hidden layers with 16 units each, and an 8-dimensional latent space, using ReLU activation functions. The model was trained for 100 epochs using the AdamW optimizer, with an initial learning rate of $10^{-3}$ decayed to $10^{-4}$ via a cosine schedule, a batch size of 512, and a lag time of 10 time steps.

\textbf{Baseline methods.} We compared our approach against the following baseline models:
\begin{itemize}
    \item \textbf{Linear Least Squares (LinLS).} A linear regression model trained directly on the raw input features without any nonlinear transformation or latent representation.
    \item \textbf{Kernel Ridge Regression (KRR)~\cite{kostic2022learning}.} We trained a KRR model with a Gaussian kernel, using the bandwidth estimated via the median heuristic~\cite{garreau2017large}. The model was trained with a rank of 8, a Tikhonov regularization parameter of $10^{-6}$, and using Arnoldi iterations.
    \item \textbf{VAMPNets~\cite{mardt2018vampnets}.} Trained with the same MLP encoder architecture as our approach. The VAMP loss was defined by a Schatten norm of 2 and no centering of the covariances.
    \item \textbf{Dynamic Autoencoder (DAE)~\cite{lusch2018deep}.} Trained with the same MLP encoder architecture as in our approach; the decoder was defined symmetrically. The DAE loss weights (reconstruction, prediction, and linear evolution) were set to 1.
    \item \textbf{Consistent Autoencoder~\cite{azencot2020forecasting}.} Trained with the same MLP encoder architecture as in our approach; the decoder was defined symmetrically. The CAE loss weights (reconstruction, prediction, backward prediction, linear evolution, and consistency) were set to 1.
\end{itemize}

For all deep learning-based baselines (VAMPNets, DAE, CAE), models were trained for 100 epochs using a batch size of 512. VAMPNets used the AdamW optimizer with a learning rate of $10^{-4}$, while DAE and CAE used the Adam optimizer with a learning rate of $10^{-3}$. All baselines were implemented using \texttt{kooplearn} 1.1.3~\cite{kooplearn}.

\textbf{Additional analysis.} In \cref{fig:l63_efuns}, we show the leading eigenfunctions of the transfer operators computed using our method and the baseline approaches. These visualizations highlight qualitative differences in the learned spectral structures, offering insight into the dynamics captured by each method. The leading eigenfunction of KRR, VAMPNets, and DAE is constant and associated with the stable attractor. Our method, LinLS, and KRR, find an eigenfunction with eigenvalue $\approx .996$ which clearly separates the two lobes of the attractor. %

\begin{figure}[h!]
\centering
\includegraphics[width=\textwidth]{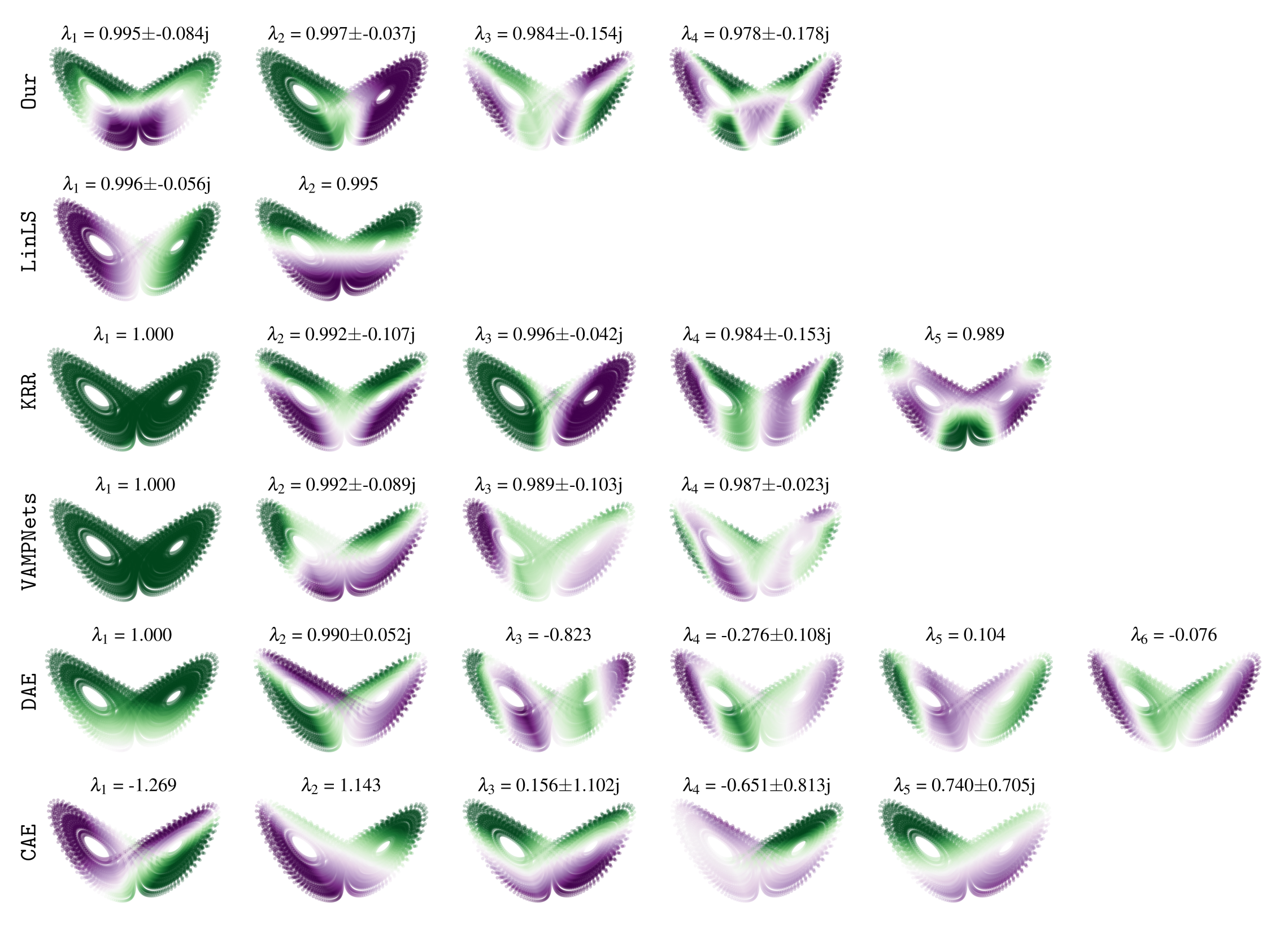}
\caption{Leading eigenfunctions computed by our and baseline approaches. Each row corresponds to a different method, and each column shows an eigenfunction ordered by decreasing eigenvalue magnitude.}
\label{fig:l63_efuns}
\end{figure}

\subsection{Protein folding}
\textbf{Training details.} We used data from~\cite{lindorff2011fast}, which can be requested directly to De Shaw Research and are available without charge for academic usage. 
Our encoder consisted of a SchNet~\cite{schutt2017schnet} graph neural network with 3 interaction blocks, 16 RBF functions and an hidden dimension of 64. The model was trained with an AdamW optimizer with starting learning rate of $10^{-2}$ decaying to $10^{-4}$ with a cosine schedule, using the \texttt{mlcolvar}~\cite{bonati2023unified} library.

\textbf{Additional analysis.} To understand to what mode is associated the leading eigenfunction $\Psi_1$, in Fig.~\ref{fig:si-trpcage} we correlated it with two physical quantities associated with the folding, which are the Root-Mean-Square-Deviation (RMSD) and the Radius of Gyration, see~\cref{fig:si-trpcage}. Furthermore, to obtain a finer understanding, we used sparse linear models to approximate the CVs via LASSO regression. This yields a surrogate model which is a linear combination of a few physical descriptors, hence interpretable. To choose the regularization strength, we computed the Mean Square Error of the surrogate model versus the number of features, see~\cref{fig:si-lasso}.

We performed LASSO regression on a set of contact functions determining the presence of hydrogen bonds. The features selected by this procedure, as well as a snapshot of the protein where these features are highlighted, are reported in~\cref{fig:si_table_and_snapshot}. Interestingly, some of the selected features pertain to side-chain interactions, a piece of information that would have been impossible to get using only C$_{\alpha}$ atoms to train the encoder.

\begin{figure}[h!]
\centering
\begin{minipage}{0.49\textwidth}
\centering
\includegraphics[width=\textwidth]{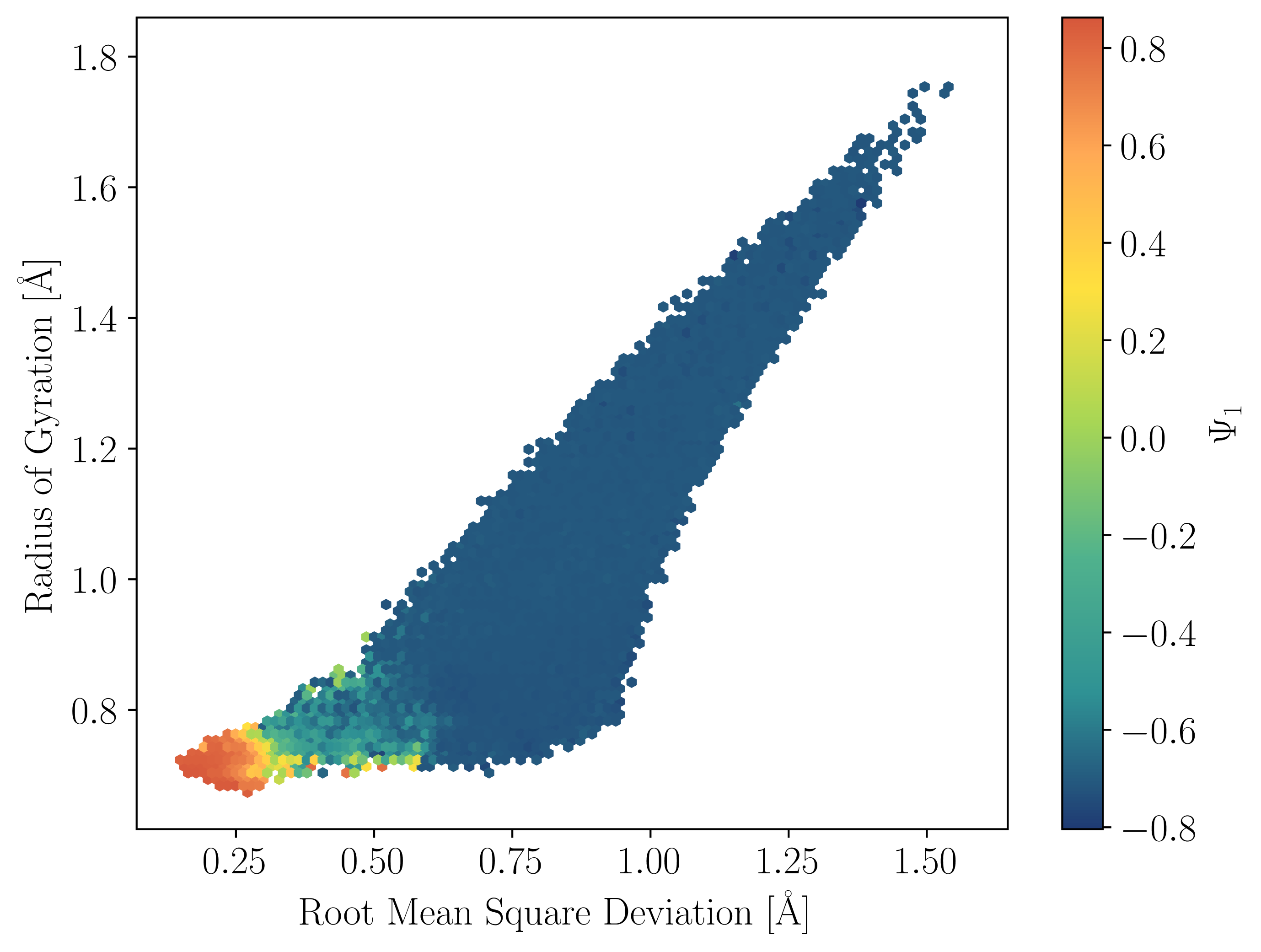}
\caption{The value of the leading eigenfunction $\Psi_1$ of the evolution operator is highly correlated with the RMSD and Radius of Gyration of the Trp-cage protein.}
\label{fig:si-trpcage}
\end{minipage}
\hfill
\begin{minipage}{0.49\textwidth}
\centering
\includegraphics[width=\textwidth]{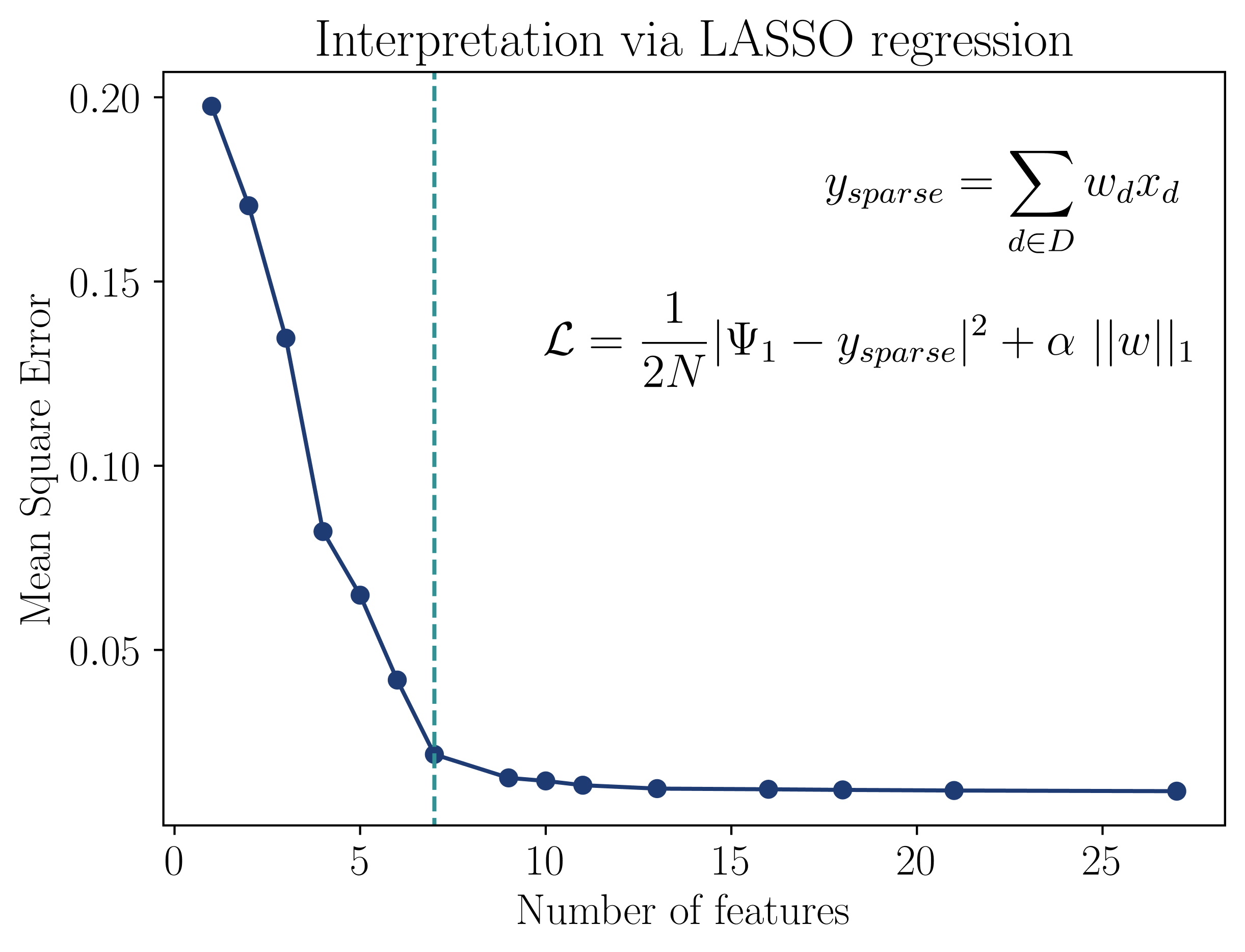}
\caption{MSE of approximating $\Psi_1$ by LASSO regression on meaningful physical descriptors. For Trp-cage we constructed a library of hydrogen-bond contact functions. The selected descriptors are reported in~\cref{fig:si_table_and_snapshot}}
\label{fig:si-lasso}
\end{minipage}
\end{figure}

\begin{figure}[h]
\centering
\setlength{\tabcolsep}{4pt}
\footnotesize
\begin{minipage}[c]{0.68\textwidth}
    \centering
    \begin{tabular}{lc}
        \toprule
        \textbf{Physical descriptors (H-bonds)} & \textbf{Normalized Coefficient} \\
        \midrule
        GLY10-O -- SER13-N & 0.307 \\
        GLY11-O -- ARG16-N & 0.294 \\
        TRP6-O -- GLY11-N & 0.170 \\
        TRP6-NE1s (sidechain) -- ARG16-O & 0.109 \\
        GLN5-O -- ASP9-N & 0.073 \\
        TRP6-NE1s (sidechain) -- PRO17-O & 0.044 \\
        TRP6-NE1s (sidechain) -- PRO18-N & 0.002 \\
        \bottomrule
    \end{tabular}
    \vspace{0.2cm}
    \label{tab:eq1_norm_coeff}
\end{minipage}
\hfill
\begin{minipage}[c]{0.28\textwidth}
    \centering
    \includegraphics[width=\linewidth]{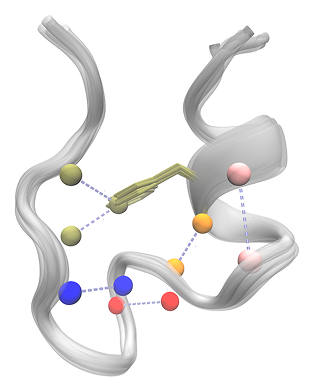}
\end{minipage}
\caption{Normalized hydrogen-bond coefficients selected by the LASSO model (left) and representative structural snapshot (right) with the features highlighted.}
\label{fig:si_table_and_snapshot}
\end{figure}

\subsection{Ligand binding}
\textbf{Simulations details.}
We selected a subset of host-guest systems for the SAMPL5 challenge \cite{bannan2016blind,yin2017overview} to evaluate our method's performance, including three ligands (G1, G2, G3) and the octa-acid calixarane host (OAMe). Simulations were run in GROMACS 2024.5 \cite{Abraham2015} patched with PLUMED 2.9.3 \cite{tribello2014plumed}. Systems were built using the GAFF \cite{wang2004development} force field with RESP \cite{bayly1993well} charges, solvated in a cubic TIP3P \cite{jorgensen1983comparison} water box \SI{40.27}{\angstrom} length, containing 2100 water molecules. System charge balanced with \ce{Na+} ions. Our timestep is \SI{2}{\femto\second} and the temperature is set to \SI{300}{\kelvin} via a velocity rescale thermostat \cite{bussi2007canonical} with a coupling time of \SI{0.1}{\pico\second}. All simulations aligned the host's vertical axis $h$ with the box axis and centered coordinates on virtual atom V1. All production simulations were initiated from the dissociated state of each ligand. Trajectories were terminated when the ligand fully rebounded into the binding pocket (defined as host-guest distance $h < \qty{6}{\angstrom}$). For each ligand, we performed 10 independent production trajectories, with coordinates saved every 500 steps.

In our simulations, we applied a funnel restraint \cite{limongelli2013funnel} to limit the conformational space explored by the ligand in the unbound state, in turn accelerating the binding process. The parameters are identical to those used in previous studies \cite{rizzi2021role}. We define $h$ as the projection of each ligand along the binding axis, treated as its radial component. For $h \geq \SI{10}{\angstrom}$, the funnel surface is a cylinder with radius $R_\text{cyl} = \SI{2}{\angstrom}$ along the vertical axis. For $h < \SI{10}{\angstrom}$, the funnel opens into a conical shape with a $45^\circ$ angle, defined by $r = 12 - h$. The force acting on a displacement $x$ from the funnel surface is harmonic:
\[
F_{\text{funnel}} = -k_F x \quad \text{with} \quad k_F = \SI{20}{\kilo\joule\per\mol\per\angstrom\squared}
\]
An additional harmonic restraint prevents the ligand from escaping too far from the host, enforcing an upper boundary:
\[
F_{\text{upper}} = -k_U (h - 18) \quad \text{for} \quad h > \SI{18}{\angstrom}, \quad \text{with} \quad k_U = \SI{40}{\kilo\joule\per\mol\per\angstrom\squared}
\]

The data will be released to ensure the reproducibility of the experiment.

\textbf{Training details.} 
Our encoder consisted of a SchNet~\cite{schutt2017schnet} graph neural network with 3 interaction blocks, 16 RBF functions, and a hidden dimension of 64 with an AdamW optimizer with starting learning rate of $10^{-2}$ decaying to $10^{-4}$ with a cosine schedule.

\begin{figure}[h!]
\centering
\includegraphics[width=0.9\textwidth]{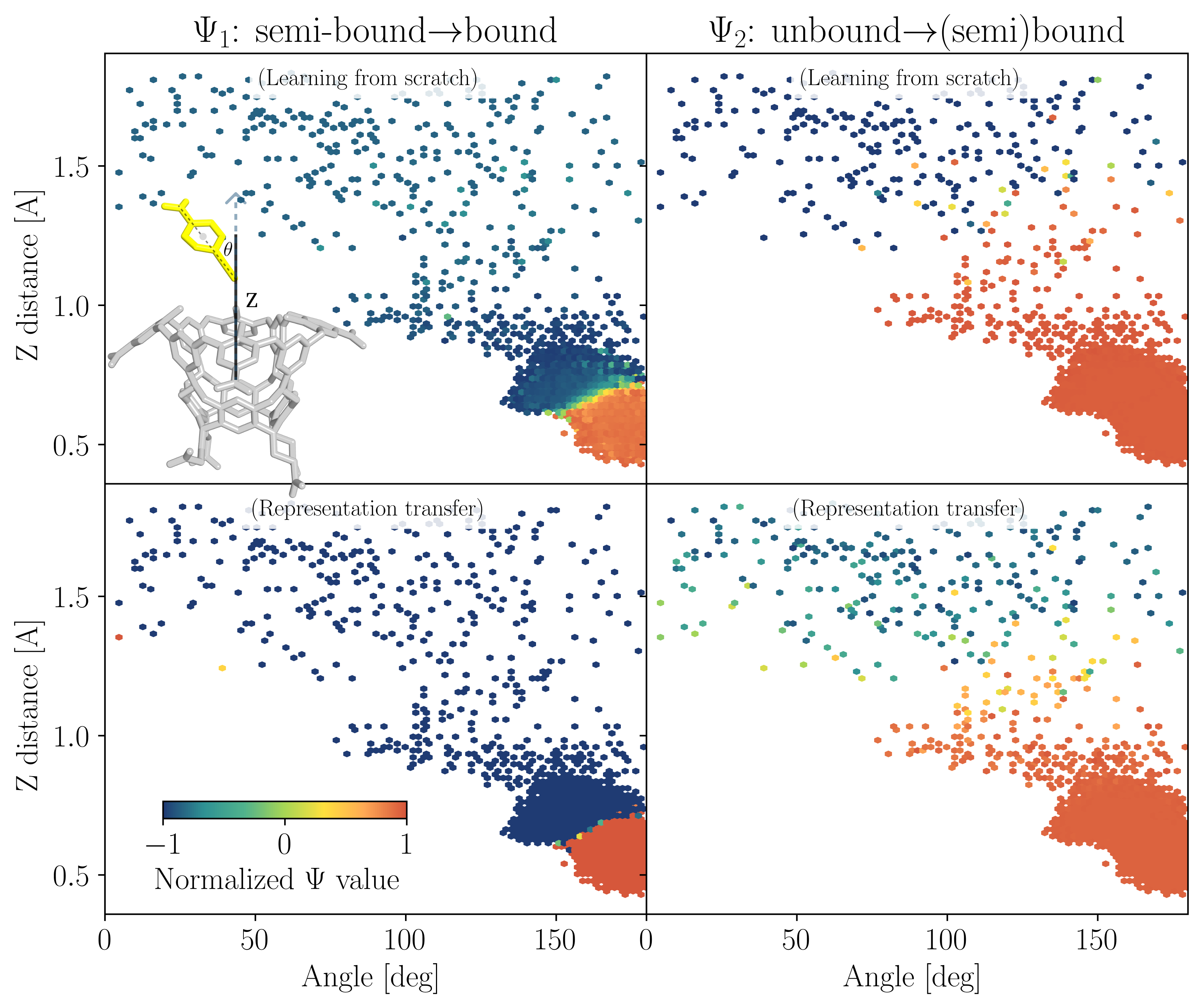}
\caption{Analysis of the leading eigenfunctions in the space of the host-guest distance $z$ and the ligand orientation $\theta$ for $\Psi_1$ (left) and $\Psi_2$ (right). The first row contains the results obtained from training from scratch the representation, while the second row contains the case in which it is transferred from other systems.}
\label{fig:si-calix}
\end{figure}

\textbf{Additional analysis.} In Fig.~\ref{fig:si-calix} we inspect the two leading eigenfunctions of the evolution operator by correlating them with two physical descriptors connected to the binding: the distance along the $z$ direction between the center of mass of the host and the guest and the angle of the ligand with respect to the $z$ axis (see figure in the inset). These results allow us to correlate the $\Psi_1$ eigenfunction to the transition between the semi-bound pose to the native one, which is due to the presence of trapped water molecules inside the pocket~\cite{rizzi2021role,bhakat2017resolving}. The second eigenfunction $\Psi_2$ is instead associated with the binding process. Furthermore, we compared the eigenfunctions obtained by training the representation from scratch on the G2 ligand with the case in which this is transferred from other ligands (G1 and G3), obtaining a remarkable agreement. The ligands G1, G2, and G3 are represented in~\cref{fig:si-ligands}

\begin{figure}[h!]
\centering
\includegraphics[width=0.9\textwidth]{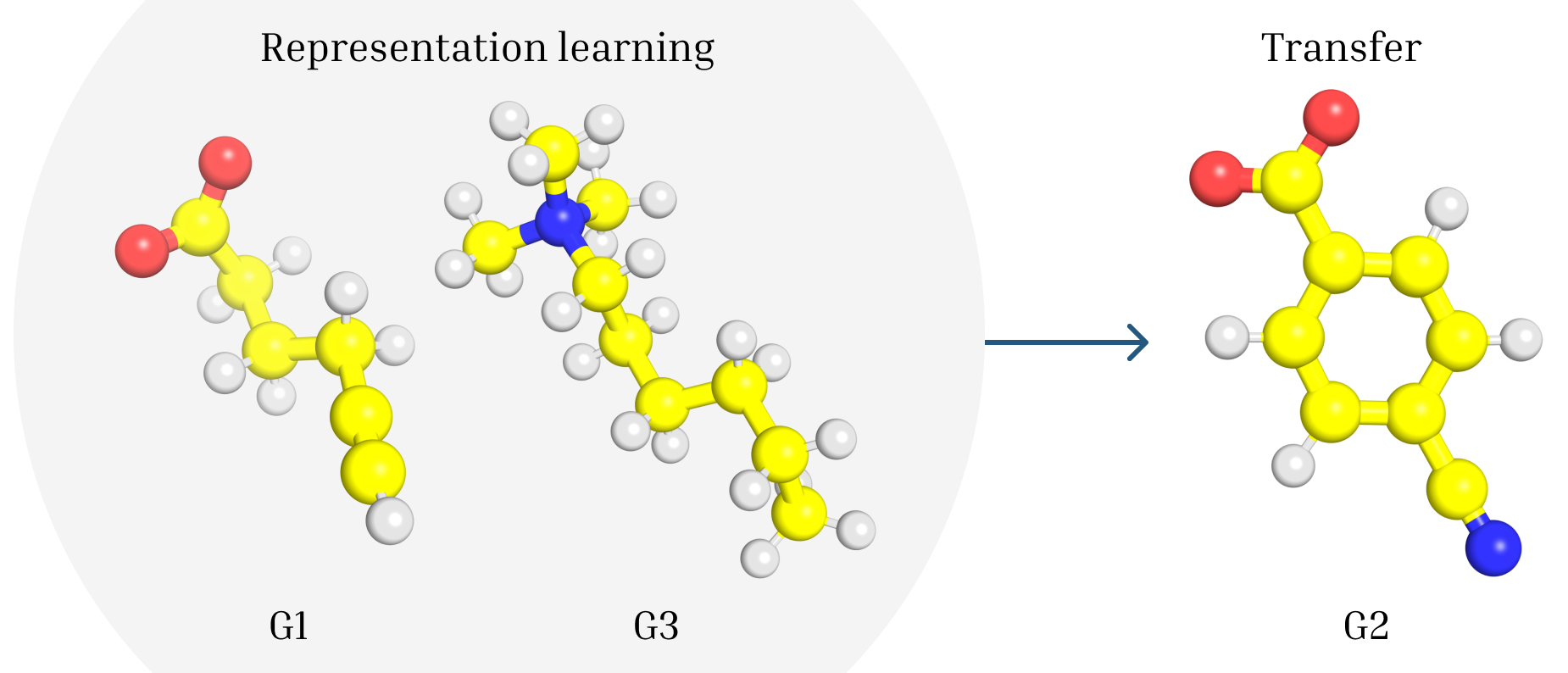}
\caption{The three different molecules studied in the ligand-binding experiment.}
\label{fig:si-ligands}
\end{figure}

\subsection{Climate modeling}
\textbf{Training details.} Following the methodology outlined in~\cite{noaaClimatePrediction}, we compute SST* from sea surface temperature (SST) data provided by the ORAS5 reanalysis~\cite{zuo2019ecmwf}, as made available through the ChaosBench dataset~\cite{nathaniel2024chaosbench}. The dataset spans a 45-year period (1979–2023) at a spatial resolution of 1.5\degree, resulting in a time series of 540 monthly snapshots, each with dimensions $121 \times 240$. Data from 1979 to 2016 was used for training, while the 2017–2023 period was reserved for validation.

We employed a ResNet-18~\cite{he2016deep} architecture as the encoder and a linear layer mapping to a 128-dimensional latent space as the predictive network. We trained the encoder adding simplicial normalization~\cite{lavoie2022simplicial} with dimension 4, spectral normalization~\cite{miyato2018spectral} applied to the predictive layer, and gradient clipping with a maximum norm of 0.2. To incorporate temporal memory, we augmented the input by appending the previous SST* snapshot as an additional input channel. The model was trained with a lag time of one month for 1,000 epochs using the AdamW optimizer, an initial learning rate of $10^{-3}$ decayed to $10^{-5}$ via a cosine schedule, and a batch size of 64. We analyzed the model achieving the highest validation score. \cref{tab:CM_evals} shows the leading eigenvalues of the learned transfer operator.

The hyperparameters reported above were selected following a grid search; \cref{tab:CM_HPO} summarizes the ranges explored during the search.

In this experiment, we encountered training instability, where runs that achieved good performance on the validation set were difficult to reproduce consistently. We attribute this instability to overfitting, and we are actively working on improving the training pipeline for this experiment.

\setlength{\tabcolsep}{4pt}
\begin{table}
\footnotesize
\begin{minipage}[t]{0.48\textwidth}
\begin{tabular}[t]{lccccc}
\toprule
\textbf{Idx} & \textbf{Re} & \textbf{Im} & \textbf{Abs} & \textbf{Decor (yr)} & \textbf{Freq (yr)} \\
\midrule
3 & 0.86 & 0.50 & 0.99 & 12.21 & 1.00 \\
4 & 0.86 & -0.50 & 0.99 & 12.21 & -1.00 \\
5 & 0.49 & 0.85 & 0.98 & 4.64 & 0.50 \\
6 & 0.49 & -0.85 & 0.98 & 4.64 & -0.50 \\
7 & 0.00 & 0.97 & 0.97 & 2.60 & 0.33 \\
8 & 0.00 & -0.97 & 0.97 & 2.60 & -0.33 \\
10 & -0.47 & -0.83 & 0.96 & 1.81 & -0.25 \\
9 & -0.47 & 0.83 & 0.96 & 1.81 & 0.25 \\
0 & -0.82 & 0.47 & 0.95 & 1.49 & 0.20 \\
1 & -0.82 & -0.47 & 0.95 & 1.49 & -0.20 \\
2 & -0.94 & 0.00 & 0.94 & 1.43 & 0.17 \\
11 & 0.93 & 0.00 & 0.93 & 1.18 & 0.00 \\
12 & 0.80 & 0.46 & 0.93 & 1.09 & 1.00 \\
13 & 0.80 & -0.46 & 0.93 & 1.09 & -1.00 \\
14 & 0.45 & 0.79 & 0.91 & 0.88 & 0.50 \\
15 & 0.45 & -0.79 & 0.91 & 0.88 & -0.50 \\
16 & 0.00 & 0.88 & 0.88 & 0.68 & 0.33 \\
17 & 0.00 & -0.88 & 0.88 & 0.68 & -0.33 \\
\bottomrule
\end{tabular}
\end{minipage}%
\hfill
\begin{minipage}[t]{0.48\textwidth}
\begin{tabular}[t]{lccccc}
\toprule
\textbf{Idx} & \textbf{Re} & \textbf{Im} & \textbf{Abs} & \textbf{Decor (yr)} & \textbf{Freq (yr)} \\
\midrule
18 & -0.42 & 0.72 & 0.84 & 0.47 & 0.25 \\
19 & -0.42 & -0.72 & 0.84 & 0.47 & -0.25 \\
20 & -0.67 & 0.38 & 0.77 & 0.32 & 0.20 \\
21 & -0.67 & -0.38 & 0.77 & 0.32 & -0.20 \\
22 & -0.75 & 0.00 & 0.75 & 0.29 & 0.17 \\
23 & 0.71 & 0.00 & 0.71 & 0.24 & 0.00 \\
24 & 0.54 & 0.33 & 0.63 & 0.18 & 0.95 \\
25 & 0.54 & -0.33 & 0.63 & 0.18 & -0.95 \\
26 & 0.29 & 0.55 & 0.62 & 0.17 & 0.48 \\
27 & 0.29 & -0.55 & 0.62 & 0.17 & -0.48 \\
28 & -0.02 & 0.56 & 0.56 & 0.15 & 0.33 \\
29 & -0.02 & -0.56 & 0.56 & 0.15 & -0.33 \\
30 & -0.28 & 0.46 & 0.54 & 0.14 & 0.25 \\
31 & -0.28 & -0.46 & 0.54 & 0.14 & -0.25 \\
32 & -0.44 & 0.26 & 0.51 & 0.12 & 0.20 \\
33 & -0.44 & -0.26 & 0.51 & 0.12 & -0.20 \\
34 & -0.48 & 0.00 & 0.48 & 0.11 & 0.17 \\
\bottomrule
\end{tabular}
\end{minipage}

\caption{Leading eigenvalues of the transfer operator learned on SST\textsuperscript{*} data. Each eigenvalue is expressed in terms of its real (Re), imaginary (Im), and absolute (Abs) components. The associated decorrelation times and oscillation frequencies (in years) are also reported. Eigenvalues are listed in descending order with respect to their absolute value, and those with a decorrelation time shorter than 1/12 years—the sampling frequency—were discarded.}
\label{tab:CM_evals}
\end{table}
\setlength{\tabcolsep}{4pt}
\begin{table}
\centering
\footnotesize
\begin{tabular}{lc}
\toprule
\textbf{Hyperparameter} & \textbf{Search Range} \\
\midrule
Latent dimensions & \texttt{[32, 64, 128, ..., 1024]} \\
Max gradient clipping norm & \texttt{[None, 0.1, 0.2, 0.5]} \\
Normalization of linear layer & \texttt{[False, True]} \\
Regularization & \texttt{[0, 1e-5, ..., 1e-2]} \\
Simplicial normalization dimensions & \texttt{[0, 2, ..., 16]} \\
History length & \texttt{[0, 1, 2]} \\
\bottomrule
\end{tabular}
\vspace{0.2cm}
\caption{Hyperparameter ranges explored during grid search for the climate modeling task.}
\label{tab:CM_HPO}
\end{table}

\end{document}